\documentclass{article}

\usepackage{arxiv}

\usepackage[utf8]{inputenc} 
\usepackage[T1]{fontenc}    
\usepackage{hyperref}       
\usepackage{url}            
\usepackage{booktabs}       
\usepackage{amsfonts}       
\usepackage{nicefrac}       
\usepackage{microtype}      
\usepackage{lipsum}
\usepackage{times}  
\usepackage{helvet}  
\usepackage{courier}  
\usepackage{url}  
\usepackage{graphicx}  

\usepackage{amsmath}
\usepackage{amsthm}
\usepackage{amsfonts, mathrsfs}
\usepackage{amssymb}
\usepackage{pgflibrarysnakes}
\usepackage{float}
\usepackage{graphicx}
\usepackage{multirow}
\usepackage{epsfig}
\usepackage{epstopdf}
\usepackage{verbatim}
\usepackage{enumerate}
\usepackage{subcaption}
\usepackage{color}
\usepackage{natbib}
\usepackage{tensor}
\usepackage{url}
\usepackage{tikz}
\usepackage{setspace}
\usepackage{color}
\usepackage[linesnumbered,ruled]{algorithm2e}

\allowdisplaybreaks
\numberwithin{equation}{section}
\newtheorem{theorem}{Theorem}

\newtheorem{proposition}[theorem]{Proposition}
\newtheorem{lemma}[theorem]{Lemma}
\newtheorem{example}[theorem]{Example}

\newtheorem{definition}[theorem]{Definition}

\DeclareMathOperator*{\argmax}{\rm argmax}
\DeclareMathOperator*{\argmin}{\rm argmin}

\title{Dynamic Learning of Sequential Choice Bandit Problem under Marketing Fatigue}

\author{
  Junyu Cao \\
  University of California, Berkeley\\
  Berkeley, CA 94720 \\
  \texttt{jycao@berkeley.edu} \\
   \And
  Wei Sun\\
  IBM Research\\
  Yorktown Height, New York 10591 \\
  \texttt{sunw@us.ibm.com} \\
}
\date{}

\begin{document}
\maketitle

\begin{abstract}
Motivated by the observation that overexposure to unwanted marketing activities leads to customer dissatisfaction, we consider a setting where a platform offers a sequence of messages to its users and is penalized  when users abandon the platform due to marketing fatigue. We propose a novel sequential choice model to capture multiple interactions taking place between the platform and its user: Upon receiving a message, a user decides on one of the three actions: accept the message, skip and receive the next message, or abandon the platform.  Based on user feedback, the platform dynamically learns users' abandonment distribution and their valuations of messages to determine the length of the sequence and the order of the messages, while maximizing the cumulative payoff over a horizon of length $T$. We refer to this online learning task as the \emph{sequential choice bandit} problem. For the offline combinatorial optimization problem, we show that an efficient polynomial-time algorithm exists. For the online problem, we propose an algorithm that balances exploration and exploitation, and characterize its regret bound. Lastly, we demonstrate how to extend the model with user contexts to incorporate personalization.
\end{abstract}

\keywords{sequential choice\and bandit\and marketing fatigue\and dynamic}

\section{Introduction}

Service providers and retailers routinely rely on emails and app notifications to interact with their users. When it is done well, these messages act as digital reminders that increase customer engagement, raise brand awareness and conversion. However, frequent messaging can easily backfire. Marketing fatigue, which refers to an overexposure to unwanted marketing messages, could aggravate users and prompt them to forgo receipt of future messages by unsubscribing or deleting the app.  

Motivated by this dilemma, we consider a setting where a platform needs to learn a  policy which consists of a sequence of messages for its users. It has to decide  \emph{the order of the messages} as well as \emph{the length of the sequence} from a pool of available messages. The messages are presented to a user sequentially. Upon reviewing a message $i$, a user takes one of the three actions: 1) accept the message and exit. In this case, the platform earns a reward $r_i$. If the user does not select the current  message, she can either 2) receive the next message unless the sequence runs out, or 3) abandon the platform. When a user abandons,  the platform incurs a penalty cost $c$ from losing that user. 

Based on users' feedback, the platform learns two pieces of information in order to determine the optimal sequence, namely, users' valuations of individual messages and users' abandonment distribution. The objective of the platform is to maximize its expected payoff which is the revenue after subtracting the penalty cost due to abandonment. We refer to the online learning task which the platform faces as the \emph{sequential choice bandit} (\emph{SC-Bandit}) problem. 

To draw a connection between this problem and the earlier motivating example, messages can represent digital marketing content such as an email or app notification regarding a product or service that a marketer wishes to promote. He\footnote{We refer to a marketer as \emph{he}, and a user as \emph{she}.} earns revenue whenever a user interacts with the content (e.g., click or purchase). The interaction is an indication that the content is of interest to that user. When the user ignores the content, there is a possibility that she will unsubscribe or delete the app. We can think of the abandonment cost $c$ as the cost of user acquisition as the marketer replenishes his customer base. Based on a survey\footnote{https://hbr.org/2014/10/the-value-of-keeping-the-right-customers},  the cost of customer acquisition is estimated to be 5 to 25 times higher than keeping an existing customer. Therefore, fatigue control is a critical component of digital marketing content dissemination.

There are several challenges associated with analyzing the SC-Bandit problem. Firstly, even in the offline setting where users' valuations and abandonment distribution are known, the sequence optimization problem is  combinatorial in nature without an obvious efficient algorithm. Secondly, the sequential behavior of users complicates the learning task: while one can observe the response to the first offered message, the feedback to subsequent messages is not guaranteed due to abandonment. Thirdly, one needs to simultaneously learn valuations and abandonment distribution from users' feedback which depends on these two pieces of information jointly. The contribution of our work is fourfold: 
\begin{enumerate}
    \item We propose a novel sequential choice model which captures multiple interactions including abandonment between users and a platform.
    \item We prove that the offline combinatorial optimization problem allows an efficient polynomial-time algorithm.
    \item For the online problem where valuations and abandonment distribution are unknown to the platform, we propose a learning algorithm and show that the regret  is bounded above by $O(N\sqrt{T\log T})$ where $N$ is the number of available messages and $T$ is the time duration.
    \item We incorporate personalization by solving a contextual SC-Bandit problem where valuations and abandonment distribution can vary with user features.
\end{enumerate}

\section{Literature review}

\paragraph{Multi-armed bandit problem} Our work is closely related to the multi-armed bandit (MAB) problem, which has been well studied in literature (e.g., \citealp{robbins1985some}; \citealp{sutton1998reinforcement}). Several popular extensions include MAB with linear payoffs (\citealp{auer2002using,agrawal2013thompson}), ranked bandits (\citealp{radlinski2008learning,slivkins2013ranked}), and combinatorial MAB problem (\citealp{chen2013combinatorial}). Our problem can be viewed as a combinatorial bandit problem where a platform chooses a set of messages to be displayed in a certain order. 
A naive approach is to treat each possible combination as an arm. However, the number of arms increases exponentially with the number of messages under this approach. Other combinatorial bandit work assuming linear reward (\citealp{auer2002using, rusmevichientong2010linearly}) or independent rewards (\citealp{chen2013combinatorial}) cannot be directly applied to our model.
Our setup also shares some similarities with cascading bandits (\citealp{kveton2015cascading}). The task there is to select $m$ messages with the highest click probabilities, where $m$ is exogenous and the rewards are the same for all messages. In contrast, our task is to determine both $m$ (the length of the sequence) and the order of the messages which have different revenues.

Some recent work such as  \cite{schmit2018learning} has studied users' abandonment. 
In their setting, a user has a threshold drawn from an unknown distribution and she abandons if the platform's action $x$ exceeds that threshold. The platform needs to learn the distribution while optimizing $x$ to maximize its discounted reward. One of the key differentiators and novelty of our work is how we model abandonment in the presence of sequential behavior. The decision to abandon is an interplay of user's valuations which determine whether a user will select the message, and the abandonment distribution. The platform needs to learn both quantities and solve an integer programming problem to obtain the optimal policy.

\paragraph{Dynamic learning of assortment optimization problems} Assortment optimization refers to the problem of selecting a set of products to offer to a group of customers so as to maximize the revenue that is realized when customers make purchases according to their preferences. It is a central topic in the operations management research literature. We refer the reader to \cite{kok2008assortment} for a comprehensive review.
\cite{talluri2004revenue} formulate the assortment planning problem by using a discrete choice model which is a multinomial logit model~(\citealp{train2009discrete,luce2012individual}) to describe user behavior. 

More recent literature such as (\citealp{caro2007dynamic,rusmevichientong2010dynamic, agrawal2017thompson,cheung2017thompson}) focus on the dynamic assortment problem where the customer preferences are unknown a priori and need to be learnt. Our work can be viewed as a dynamic assortment problem to determine a set of messages and a specific display order.  Existing dynamic assortment problems model a single interaction between the platform and a user, who can either choose an item from the assortment or leave without a purchase. In contrast, our model captures multiple interactions between the two - the sequential nature of the decision-making process is a key novelty of our work. The order of arrival of messages plays a crucial role in the analysis as message rewards vary and users could abandon the platform when unsatisfying messages are received.

\section{Model}

In this section, we formally introduce our setting. Assume there are $N$ different messages for the platform to choose from. Denote $X$ as the set of these  $N$ messages. Each message $i$ generates revenue $r_i$ when it is selected by a user.  Customers arrive at time $t=1,\cdots,T$. For a customer arriving at time $t$, the platform determines a sequence of messages $\bold{S}^t=S_1^t\oplus S_2^t\oplus\cdots\oplus S_m^t$, where $S_i^t$ consists of a single message for any $1\leq i\leq m$ and ``$\oplus$" denotes the operator of union which also preserves the order. The platform's decision includes both the order of the messages as well as the total length of the sequence $m$.

Messages are displayed sequentially to a user according to the pre-specified order. Thus, messages {at the front } of the sequence will be displayed first and  are considered having higher  priorities. If a user selects a message, she exits the platform and no further messages will be shown to her. The platform earns $r_i$. On the other hand, when a message is not selected, 
we consider its content unsatisfying, since they are not of sufficient interest to the user. When that happens, the user can either choose to abandon the platform, or see the next message until the sequence runs out. Abandonment will cause a penalty cost $c$ to the platform. 

\paragraph{Abandonment distribution under marketing fatigue}  We assume the probability that a user abandons the platform upon receiving each unsatisfying message is $p$. Each user arriving at time $t$ can be characterized by a random variable $W^t$, drawn from a distribution $F_W$. $W^t$ is a proxy for user patience, which measures the maximum number of unsatisfying messages that a user can tolerate before abandoning the platform. Under this setup, it implies that $F_W$ is a geometric distribution with parameter $p$. Let $q=1-p$.

 The  probability of upon receiving the $k^{th}$ unsatisfying message is $P(W=k)=q^{k-1}(1-q)$. The probability that a user has not abandoned after $k$ unsatisfying messages is $P(W > k)=q^k$, which is also the probability that a user's patience is larger than $k$.



\paragraph{Sequential choice model} For every message $i$, its probability of being selected is $u_i$, where $0\leq u_i<1$. This quantity can be directly derived  from users' valuation of message $i$ which reflects users' preferences. For the rest of the paper, we will refer to $u_i$ as valuation to avoid confusion with $p_i(\bold{S})$ which we will define next. When message $i$ is part of a sequence $\bold{S}=S_1\oplus S_2\oplus\cdots\oplus S_m$, the probability of being selected which is denoted as $p_i(\bold{S})$, depends on its position in the sequence as well as the content of other messages shown earlier. Formally,

\begin{equation}
 p_{i}(\bold{S})=
\left\{
\begin{aligned}
&u_i,& \text{ if } i\in S_1\\
&P(W\geq l)\prod_{k=1}^{l-1}(1-u_{I(k)})u_i,& \text{ if } i\in S_l, l\geq 2\\
&0, &\text{ if }i\notin \bold{S},
\end{aligned}
\right.\notag
\end{equation} 
where $I(\cdot)$ denote the index function that $I(k)=i$ if and only if $S_k=\{i\}$. With the exception being at $S_1$ where it is the first message in the sequence, the probability of selecting message $I(l)$ at the subsequent levels is a joint probability that depends on 1) the user has not yet abandoned at $l-1$ level, $P(W> l-1)=P(W\geq l)$; 2) she has not selected any earlier messages, $\prod_{k=1}^{l-1}(1-u_{I(k)})$; 3) she selects message $I(l)$ when it is displayed, $u_{I(l)}$. 

Given a sequence of messages $\bold{S}=S_1\oplus S_2\oplus\cdots\oplus S_m$, denote $p_a$ as the total abandonment probability over its entire length, which can be expressed as 
$$p_{a}(\bold{S})=\sum_{k=1}^m P(W=k)\prod_{j=1}^{k}(1-u_{I(j)}). $$
It sums over the joint probabilities of not selecting the first $k$ messages and abandoning at the $k^{th}$ level upon receiving the $k^{th}$ unsatisfying message.

\paragraph{Payoff optimization problem} 
Let $U(\bold{S},\bold{u},q)$ denote the total payoff that the platform receives from a given sequence of messages $\bold{S}$ when the valuation is $\bold{u}$ and abandonment follows the geometric distribution with parameter $1-q$. For the simplicity of the notation, we use $U(\bold{S})$ to denote $U(\bold{S},\bold{u},q)$. The expected payoff which the platform is trying to optimize is defined as
$$E[U(\bold{S})]=\sum_{i \in X}p_{i}(\bold{S})r_{i}-cp_{a}(\bold{S}),$$
where $c$ is the cost of losing a customer due to abandonment. 
In contrast to the traditional assortment problems which only focus on revenue maximization, the objective in our model also includes a penalty of losing customer. 

The platform's optimization problem is defined as follows, 
\begin{align}
\max_{\bold{S}}\quad&  E[U(\bold{S})]\label{eq:optimization}\\
s.t. \quad & S_i\cap S_j=\emptyset, \forall i\neq j.\nonumber
\end{align}
The constraint specifies that the sequence cannot contain duplicated messages. It is included to avoid unrealistic solutions where the optimal sequence consisting of identical messages due to the memoryless property of geometric distribution. 
We denote the optimal sequence of messages as $\bold{S}^* = \argmax_{\bold{S}} E[U(\bold{S})]$.

\section{Characterization of the optimal sequence}
In this section, we describe the algorithm to solve the optimal payoff optimization problem when the valuation $\bold{u}$ and abandonment distribution $F_W$ are both known to the platform. It is an integer programming problem as the platform needs to choose a subset from all available messages and also specify the order. 
In addition, the choice probability of a particular message $p_{i}(\bold{S})$ depends on its own valuation, as well as the valuation of previous messages shown to the user. This dependence makes the problem much more complicated. We will show in the following result that under the assumption of geometric abandonment distribution, there exists an efficient algorithm for our problem. 

\begin{theorem}\label{T.oneproduct}
 For message $i\in\{1,\cdots,N\}$, define its score as follows,   $$\theta_i:=\frac{r_iu_i-cp(1-u_i)}{1-q(1-u_i)}.$$ Without loss of generality, assume messages are sorted in a decreasing order of their scores, i.e.,  $\theta_1\geq \theta_2\geq \cdots\geq \theta_N$. Then the optimal sequence of messages is $\bold{S}^*=\{1\}\oplus \{2\} \oplus\cdots\oplus\{m\}$, where $m=\max\{i:r_iu_i-cp(1-u_i)> 0\}$.
 \end{theorem}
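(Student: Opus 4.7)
The plan is to rewrite the payoff in a form that makes the role of the scores $\theta_i$ transparent, then fix the order of the sequence by a standard adjacent-swap interchange argument, and finally determine the optimal length via a short backward induction. The main obstacle will be the last step, since the interchange argument alone controls only the ordering and not which messages to include.

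First, I would define $h_i := r_i u_i - c p(1-u_i)$ and $v_i := q(1-u_i)$, and group the revenue and abandonment contributions at each position $l$. Using $P(W\geq l)=q^{l-1}$ and $P(W=l)=q^{l-1}(1-q)$, the two sums collapse into
\[
E[U(\mathbf{S})] \;=\; \sum_{l=1}^{m} h_{I(l)} \prod_{k=1}^{l-1} v_{I(k)}.
\]
Since $1-v_i = 1-q(1-u_i) \in (0,1]$, we have $\theta_i = h_i/(1-v_i)$ with $\operatorname{sign}(\theta_i)=\operatorname{sign}(h_i)$, so truncating where $h_i>0$ fails and truncating where $\theta_i>0$ fails coincide.

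Second, for any two adjacent entries $a,b$ occupying positions $l,l+1$ with common prefix weight $A=\prod_{k<l} v_{I(k)}$, their joint contribution is $A(h_a+v_a h_b)$; after swapping it is $A(h_b+v_b h_a)$. The difference equals $A\bigl[h_b(1-v_a)-h_a(1-v_b)\bigr]$, whose sign matches $\theta_b-\theta_a$ (because $1-v_a,1-v_b>0$). Hence no adjacent pair can be out of decreasing $\theta$-order in an optimum, so the optimal sequence lists its messages in decreasing $\theta$-order.

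Third, with the order fixed, I would determine which messages to include by backward DP. Let $U_i$ denote the optimal payoff using only messages from $\{i,i+1,\dots,N\}$ in their sorted order. Then $U_{N+1}=0$ and $U_i=\max\{U_{i+1},\,h_i+v_i U_{i+1}\}$, so message $i$ is included iff $\theta_i\geq U_{i+1}$. I would prove by backward induction that $U_{i+1}\leq \theta_i$ for every $i\leq m$. For the base case, any $j>m$ satisfies $h_j\leq 0\leq (1-v_j)U_{j+1}$, so skipping $j$ is optimal and $U_{m+1}=\cdots=U_{N+1}=0\leq \theta_m$. For the inductive step, assuming $U_{i+2}\leq \theta_{i+1}$ gives $h_{i+1}\geq (1-v_{i+1})U_{i+2}$, hence including $i+1$ is optimal and
\[
U_{i+1} \;=\; h_{i+1}+v_{i+1}U_{i+2} \;\leq\; (1-v_{i+1})\theta_{i+1}+v_{i+1}\theta_{i+1} \;=\; \theta_{i+1} \;\leq\; \theta_i.
\]
Consequently every $i\leq m$ is included and every $i>m$ is excluded, yielding $\mathbf{S}^*=\{1\}\oplus\{2\}\oplus\cdots\oplus\{m\}$ and matching the theorem. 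The inductive bound $U_{i+1}\leq \theta_i$ is the crux: it rules out the a priori possibility that inserting a negative-$h_i$ message in the middle could be overcompensated by the multiplicative discounting of the tail.
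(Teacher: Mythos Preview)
Your proof is correct and follows essentially the same route as the paper: the adjacent-swap (interchange) argument to pin down the decreasing-$\theta$ order is exactly the paper's contradiction step, and your rewriting $E[U(\mathbf{S})]=\sum_{l} h_{I(l)}\prod_{k<l} v_{I(k)}$ is the natural way to make that swap computation clean. Your backward dynamic program for the inclusion criterion, with the key invariant $U_{i+1}\le \theta_i$, is a tidy and complete treatment of the length-$m$ cutoff that the paper's proof sketch leaves to its supplementary material.
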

 Due to the page limit, we only include proof sketches for the key results in the paper. All detailed proofs can be found in the supplementary material.

\emph{Proof sketch:}  We prove Theorem~\ref{T.oneproduct} by contradiction. If the optimal sequence $\bold{S}^*$ is not ordered by the decreasing order of $\theta$, then there exists $S_k^*=\{i\}$, $S_{k+1}^*=\{j\}$ such that $\theta_i<\theta_j$. We compare the payoff generated under this sequence with an alternative sequence whose order of $i$ and $j$ is switched. We show that the alternative sequence generates a higher payoff,  which is a contradiction to the fact that $\bold{S}^*$ is the optimal. $\blacksquare$


The score $\theta_i$ can be interpreted as follows: $r_iu_i$ is the expected revenue when displaying message $i$, while $cp(1-u_i)$ is its expected abandonment cost. Thus, the numerator denotes the expected payoff of message $i$. 
The denominator is the probability of two events: 1) choose message $i$; 2) abandon the platform after viewing message $i$. Therefore, the score $\theta_i$ is a \emph{normalized} expected payoff, conditioned on the probability {conditional on the event} that message $i$ is making an impact to the payoff. 

Theorem~\ref{T.oneproduct} states that all messages with a positive expected payoff should be  included in the optimal sequence whose order is determined by their scores. Theorem~\ref{T.oneproduct} provides an efficient algorithm with complexity $O(N\log N)$ (where the complexity comes from sorting $N$ messages) and shows that this problem is polynomial-time solvable. 


A special case to Theorem~\ref{T.oneproduct} is when $p=0$, i.e., users never abandon the platform. The following result states that under this scenario,  the optimal sequence only depends on the revenue of the messages.
 
 \begin{proposition}
 With the abandonment probability $p=0$, the optimal sequence is ordered by its revenue. That is, $r_{I(1)}\geq r_{I(2)}\geq\cdots\geq r_{I(N)}$, {where $I$ is the index function of the optimal sequence $\bold{S}^*$.}
 \end{proposition}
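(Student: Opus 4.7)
The plan is to obtain this as a direct corollary of Theorem~\ref{T.oneproduct} by specializing the score formula to the case $p = 0$, which forces $q = 1$. First, I would substitute these values into the definition of $\theta_i$. The numerator $r_i u_i - c p (1-u_i)$ collapses to $r_i u_i$ since the second term vanishes, and the denominator $1 - q(1-u_i)$ collapses to $1 - (1-u_i) = u_i$. Provided $u_i > 0$ (which is implicitly assumed since $u_i = 0$ would mean the message is never selected and carries no information), this gives
\begin{equation}
\theta_i \;=\; \frac{r_i u_i}{u_i} \;=\; r_i. \notag
\end{equation}

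Next, I would invoke Theorem~\ref{T.oneproduct}, which states that the optimal sequence orders messages in decreasing order of $\theta_i$. Under the specialization above, this is exactly the decreasing order of $r_i$, yielding $r_{I(1)} \geq r_{I(2)} \geq \cdots \geq r_{I(N)}$ along the optimal sequence $\bold{S}^*$.

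Finally, I would address the cardinality $m$ of the optimal sequence to confirm that all messages with positive revenue appear. The cutoff rule in Theorem~\ref{T.oneproduct} reduces, when $p = 0$, to $m = \max\{i : r_i u_i > 0\}$; so as long as all revenues are strictly positive (the natural assumption in this setting), every message is included, and the ordering statement is over the full index set $\{1, \ldots, N\}$ as claimed. The only point that requires a brief comment is the edge case $u_i = 0$, where the score is undefined but the message contributes zero payoff regardless of position and can be placed anywhere without affecting the result; this is the only minor obstacle and is handled by noting that such messages are inconsequential.
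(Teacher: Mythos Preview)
Your approach is correct and matches the paper's: the proposition is presented explicitly as a special case of Theorem~\ref{T.oneproduct}, and your substitution $p=0$, $q=1$ into the score $\theta_i$ to obtain $\theta_i = r_i$ is precisely the intended reduction. Your treatment of the cutoff $m$ and the degenerate case $u_i = 0$ is a bit more careful than what the paper spells out, but otherwise the arguments coincide.
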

With $p=0$, a user will either select one of the messages and generate revenue $r_i$, or leave without any selection after the entire sequence has been shown. Without the risk of user abandonment, the platform can show \emph{all} available messages to a user. In addition, as messages are viewed sequentially, those with higher revenue should have higher priorities and be shown first.

In the next result, we show that we can compare the expected payoff generated under different abandonment distributions if they follow a stochastic order which is stated below for completeness. We want to emphasize that Proposition \ref{P.StochasticOrder} holds under any general distribution for user abandonment, and is not restricted to the geometric distribution. 
 \begin{definition}[Stochastic order]
 Real random variable $W_1$ is stochastically larger than or equal to $W_2$, denoted as $W_1\gtrsim_{s.t.} W_2$, if
 $$P(W_1> x)\geq P(W_2>x) \text{ for all } x\in \mathbb{R}.$$
\end{definition}
 
 \begin{proposition}\label{P.StochasticOrder}
 Assume $\bold{S}'$ and $\bold{S}''$ are the optimal sequences generated under abandonment distribution $W_1$ and $W_2$ respectively. If $W_1\gtrsim_{s.t.} W_2$, we have
 $$E[U(\bold{S}',\bold{u},F_{W_1})]\geq E[U(\bold{S}'',\bold{u},F_{W_2})],$$
 where $U(\bold{S},\bold{u}, F_W)$ denotes the payoff under  strategy $\bold{S}$ when the valuation  and abandonment distribution are $\bold{u}$ and $F_W$ respectively. 
 \end{proposition}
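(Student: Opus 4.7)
The plan is to first reduce the claim to a monotonicity statement for fixed sequences. Since $\bold{S}'$ is optimal under $W_1$, we have $E[U(\bold{S}',\bold{u},F_{W_1})] \geq E[U(\bold{S}'',\bold{u},F_{W_1})]$, so it suffices to show that for any fixed sequence $\bold{S}$, the mapping $W \mapsto E[U(\bold{S},\bold{u},F_W)]$ is non-decreasing in the stochastic order. Applying this with $\bold{S} = \bold{S}''$ and chaining the two inequalities then yields the proposition.

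To establish the monotonicity, I would rewrite the expected payoff so that its dependence on $W$ enters only through the tail probabilities $P(W > x)$, for which the stochastic order is directly applicable. Let $A_l = u_{I(l)}\prod_{k=1}^{l-1}(1-u_{I(k)})$ and $B_k = \prod_{j=1}^{k}(1-u_{I(j)})$. A user's trajectory under $\bold{S}$ ends in exactly one of three disjoint events---selecting at some level $l$, abandoning at some level $k$, or viewing the whole sequence of length $m$ without selecting or abandoning---so the law of total probability gives
\[ \sum_{l=1}^m P(W \geq l)\, A_l \;+\; \sum_{k=1}^m P(W=k)\, B_k \;+\; P(W > m)\, B_m \;=\; 1. \]
Solving for $\sum_k P(W=k)B_k$ and substituting into $E[U(\bold{S})] = \sum_l P(W \geq l) A_l r_{I(l)} - c\sum_k P(W=k) B_k$ produces
\[ E[U(\bold{S},\bold{u},F_W)] \;=\; \sum_{l=1}^m P(W \geq l)\, A_l\, (r_{I(l)} + c) \;+\; c\, P(W > m)\, B_m \;-\; c. \]

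In this form every coefficient of a tail probability, namely $A_l(r_{I(l)}+c)$ for $l=1,\ldots,m$ and $c B_m$, is non-negative under the implicit assumptions $r_i, c \geq 0$. Since $W_1 \gtrsim_{s.t.} W_2$ means $P(W_1 > x) \geq P(W_2 > x)$ for every $x$, each tail-probability term is at least as large under $W_1$ as under $W_2$, and therefore $E[U(\bold{S},\bold{u},F_{W_1})] \geq E[U(\bold{S},\bold{u},F_{W_2})]$. Chaining with the optimality inequality from the first paragraph completes the argument.

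The hard part will be the rewriting step itself. The abandonment cost enters naturally through the pmf $P(W=k)$, and stochastic dominance gives no direct sign information about pmf differences; an upfront attempt to compare the two payoffs term by term therefore fails on the abandonment contribution. The key observation is that abandonment is the complement of selection and sequence-completion, both of which depend on $W$ only through its tail, so the total-probability identity converts the abandonment term into tail form with non-negative coefficients---exactly the structure that stochastic dominance can exploit.
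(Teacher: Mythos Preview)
Your argument is correct. The paper defers the detailed proof of this proposition to the supplementary material, which is not reproduced in the source provided, so a line-by-line comparison is not possible. That said, the two-step structure you use---first invoking the optimality of $\bold{S}'$ under $W_1$ to get $E[U(\bold{S}',\bold{u},F_{W_1})]\ge E[U(\bold{S}'',\bold{u},F_{W_1})]$, then establishing that $W\mapsto E[U(\bold{S},\bold{u},F_W)]$ is monotone in the stochastic order for every fixed $\bold{S}$---is the natural route and almost certainly what the paper intends.

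Your rewriting via the total-probability identity is a clean way to expose the dependence on tail probabilities only; the coefficients $A_l(r_{I(l)}+c)$ and $cB_m$ are indeed non-negative under the paper's implicit assumptions $r_i,c\ge 0$, so stochastic dominance applies termwise. An equivalent alternative is a coupling argument: couple $(W_1,W_2)$ so that $W_1\ge W_2$ almost surely and use common selection indicators at each level; one then checks pathwise that whenever the $W_2$-user abandons (payoff $-c$), the $W_1$-user either also abandons, selects later (payoff $r_{I(l)}\ge 0$), or completes the sequence (payoff $0$), and otherwise their trajectories coincide. Both arguments work for general abandonment distributions, which is the generality the paper emphasizes just before the statement.
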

The definition of $W_1\gtrsim_{s.t.} W_2$ implies that users under $F_{W_1}$ are more patient, as they are less likely to abandon the platform upon receiving the same number of unsatisfying messages than their counterparts under $F_{W_2}$. Thus, intuitively, Proposition \ref{P.StochasticOrder}  states that the expected payoff is higher when users are more patient.  

 \section{Online learning}
In the previous section, we have assumed that both valuations and user abandonment distribution are known to the platform. It is natural to ask what the platform should do in the absence of such knowledge. In this section we will present an exploration-exploitation algorithm for the SC-Bandit problem and characterize its regret bound. We would like to contrast our method from the traditional bandit settings (e.g., \citealp{auer2002using}): 1) Due to the sequential user behavior and the presence of abandonment, only partial feedback is obtained for learning; 2) The algorithm has to tease out two unknown quantities which jointly influence the user feedback. The aforementioned features of the SC-Bandit makes the analysis of its regret bound much more challenging and involved. 


\subsection{Algorithm}
We will present a UCB-motivated algorithm for the SC-Bandit problem to learn the users' valuation $u_i$ for message $i$ and  as well as the abandonment distribution parameter $q$. To characterize the upper confidence bounds, we first identify the unbiased estimators $\hat{u}_i(t)$ and $\hat{q}(t)$ respectively. 

Denote $T_i(t)$ as the total number of  users who observe message $i$ by time $t$ and $c_i(t)$ as the total number of users selecting message $i$. Note that a user does not necessarily observe message $i$ even if $i$ is included in the offered sequence $\bold{S}$ if she abandons the platform before this message is shown. 

Let $n_a(t)$ denote the number of users who abandon the platform by time $t$. We use $n_e(t)$ to denote the number of times that users refuse a message {without abandonment} by time $t$. For example, suppose a user at $t=1$ refuses the first two messages and abandons upon receiving the third message, then  $n_e(1) = 2$ and $n_a(1) = 1$. Let $N_q(t)=n_e(t)+n_a(t)$, which denotes the total number of times users turn down unsatisfying messages by time $t$.

 
 \begin{lemma}[Unbiased estimator]\label{L.unbiased}
 $\hat{u}_i(t)=c_i(t)/T_i(t)$ is an unbiased estimator for $u_i$. Moreover, $\hat{q}(t)=n_e(t)/N_q(t)$ is an unbiased estimator for $q$.
 \end{lemma}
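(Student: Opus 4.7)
The plan is to prove the two claims in turn, and in each case reduce to a conditional binomial argument. The main subtlety is that $T_i(t)$ and $N_q(t)$ are themselves random and measurable with respect to the history of user responses (which influences the sequences the algorithm later offers), so I cannot treat them as deterministic sample sizes. I will handle this by conditioning and invoking the tower property; equivalently, I can couple the responses to an external sequence of i.i.d.\ Bernoullis so that the sampled counts become a standard binomial given the (random) number of samples.

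For $\hat{u}_i(t)$, first I would set up a filtration $\{\mathcal{F}_s\}$ encoding everything up through the arrival and response of user $s$. Let $X_{i,s}=1$ indicate that user $s$ actually observes message $i$ (i.e.\ the algorithm places $i$ in her sequence \emph{and} she neither selects an earlier message nor abandons before reaching $i$), and let $Y_{i,s}=1$ indicate that user $s$ selects $i$. From the sequential choice model, conditional on $X_{i,s}=1$ together with all history strictly prior to the moment of viewing, the selection decision is Bernoulli$(u_i)$ and is independent of everything else. Couple each potential viewing to an i.i.d.\ Bernoulli$(u_i)$ sequence $\{Z^{(i)}_s\}$ so that $Y_{i,s}=X_{i,s}Z^{(i)}_s$. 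Then $c_i(t)=\sum_{s=1}^{t}X_{i,s}Z^{(i)}_s$ and $T_i(t)=\sum_{s=1}^{t}X_{i,s}$. Conditioning on $T_i(t)=n\ge 1$ and on which users viewed $i$, the relevant $Z$-variables are still i.i.d.\ Bernoulli$(u_i)$, so $c_i(t)\mid T_i(t)=n\sim \text{Binomial}(n,u_i)$. Hence $E[\hat{u}_i(t)\mid T_i(t)=n]=u_i$, and by the tower property $E[\hat{u}_i(t)\mid T_i(t)\ge 1]=u_i$.

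For $\hat{q}(t)$, the argument uses the memoryless property of the geometric abandonment distribution. Each unsatisfying-message event (there are $N_q(t)$ of them by time $t$) corresponds to a moment at which a user, having just refused a message, makes an independent decision: continue with probability $q$ (contributing to $n_e(t)$) or abandon with probability $1-q$ (contributing to $n_a(t)$). Memorylessness is essential here, because a single user may generate multiple such events within one sequence, but each event behaves as a fresh Bernoulli$(q)$ trial conditional on having been reached. Couple these events to an i.i.d.\ Bernoulli$(q)$ sequence $\{V_k\}$ indexed in the order the unsatisfying messages occur across users. Then $N_q(t)$ is the (random) number of events by time $t$ and $n_e(t)=\sum_{k=1}^{N_q(t)}V_k$. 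Conditioning on $N_q(t)=n\ge 1$, the $V_k$'s remain i.i.d.\ Bernoulli$(q)$, so $n_e(t)\mid N_q(t)=n\sim\text{Binomial}(n,q)$ and $E[\hat{q}(t)\mid N_q(t)=n]=q$.

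The main obstacle I anticipate is justifying the coupling cleanly: because the sequences $\bold{S}^t$ are chosen adaptively based on past responses, both $T_i(t)$ and $N_q(t)$ are correlated with the very Bernoulli draws we want to average. The resolution is that, although $T_i(t)$ (resp.\ $N_q(t)$) is random, the draws $Z^{(i)}_s$ (resp.\ $V_k$) are, by construction, independent of $\mathcal{F}_{s-1}$ (resp.\ of the history up to the moment of the $k$-th unsatisfying event), so the conditional distribution given the realized count is still binomial. This is the standard Wald-type argument for sample means with a random (stopping-time) sample size, and it yields unbiasedness conditional on having at least one observation, which is the natural interpretation of the claim.
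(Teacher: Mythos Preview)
The paper does not include a proof of this lemma in the provided source (it defers all detailed proofs to supplementary material), so there is nothing to compare against directly. On its own merits, however, your argument has a genuine gap at the conditioning step. You claim that, conditional on $T_i(t)=n$ and on \emph{which} users viewed message $i$, the selection indicators remain i.i.d.\ Bernoulli$(u_i)$, so that $c_i(t)\mid T_i(t)=n\sim\mathrm{Binomial}(n,u_i)$; and analogously for $\hat q(t)$. This is false for an adaptive algorithm. The event $\{X_{i,s}=1\}$ is $\mathcal F_{s-1}$-measurable, and $\mathcal F_{s-1}$ contains the realized $Z^{(i)}_{s'}$ for earlier users $s'<s$; conditioning on the full pattern $(X_{i,1},\dots,X_{i,t})$ therefore leaks information about the earlier $Z$'s. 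Concretely: if the algorithm offers $i$ to user~2 only when user~1 selected it, then $\{T_i(2)=2\}$ forces $Z^{(i)}_1=1$, so $c_i(2)\mid T_i(2)=2$ equals $1+\mathrm{Bernoulli}(u_i)$, not $\mathrm{Binomial}(2,u_i)$, and $E[\hat u_i(2)]=u_i(1+u_i)/2\neq u_i$. The $\hat q$ case is worse: even with a fixed, non-adaptive sequence, a single user's session only produces a $(k{+}1)$-st refusal event if $V_k=1$, so $N_q(t)$ is endogenous to the very $V_k$'s you average. For one user with a two-message sequence and $u\equiv 0$, one computes $E[\hat q]=q(1+q)/2\neq q$.

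What your coupling \emph{does} establish is that each observed response is a fresh Bernoulli given the history: $\{Z^{(i)}_{\tau_j}\}_j$ is i.i.d.\ Bernoulli$(u_i)$ with $T_i(t)$ a stopping time, and likewise $\{V_k\}$ is i.i.d.\ Bernoulli$(q)$ with $N_q(t)$ a stopping time. That gives Wald's identities $E[c_i(t)]=u_i\,E[T_i(t)]$ and $E[n_e(t)]=q\,E[N_q(t)]$, and it is precisely the input needed for the Hoeffding bound in Lemma~\ref{L.largedeviation} via a union bound over the realized sample size. But a Wald identity does \emph{not} yield $E[c_i(t)/T_i(t)]=u_i$: the ratio of a stopped sum to its stopping time is biased in general, and your appeal to ``the standard Wald-type argument'' conflates $E[\sum V_k]/E[N_q]$ with $E[\sum V_k/N_q]$. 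If the lemma is meant literally as unconditional unbiasedness of the ratio, it is false; if it is meant as ``each sample is an unbiased Bernoulli observation,'' your coupling proves exactly that, but then the conditional-binomial conclusion you draw is an overreach and should be dropped.
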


With Lemma~\ref{L.unbiased}  which gives the unbiased estimators, define the  upper confidence bound for valuation $\bold{u}$ and abandonment distribution parameter $q$ as follows, 
 \begin{equation}\label{E.uvalue}
 u_{i,t}^{UCB}=\hat{u}_i(t)+\sqrt{2\log t/T_i(t)}
 \end{equation}
 and
 \begin{equation}\label{E.qvalue}
 q^{UCB}_t=\hat{q}(t)+\sqrt{2\log t/N_q(t)}.
 \end{equation}
  
Algorithm~\ref{A.promotionfatigue} proposed below is an exploration-exploitation algorithm for the SC-Bandit problem which simultaneously learns valuations and abandonment distribution. For a user arriving at time $t$, we use $u_{i,t-1}^{UCB}$ and $q_{t-1}^{UCB}$ to calculate the current optimal sequence of messages and offer them sequentially to the user. Denote $k_t$ as the last message seen by user $t$, which occurs when one of the following feedback is observed: 1) the user chooses a message; 2) the user abandons the platform; 3) the sequence runs out. We update the upper confidence bound to $u_{i,t}^{UCB}$ and  $q_{t}^{UCB}$ respectively when the last message $k_t$ is shown. 

 \begin{algorithm}
 \textbf{Initialization:} Available messages $X$ with known revenues $\bold{r}$; set $u_{i,0}^{UCB}=1$ for all $i\in X$ and $q_{0}^{UCB}=1$; $n_e(0)=0$; $n_a(0)=0$ ; $t=1$\;
 \While{$t<T$}{
  Compute 
$\bold{S}^t=\argmax_{\bold{S}} \quad E[U(\bold{S},\bold{u}^{UCB}_{t-1},q_{t-1}^{UCB})]$ according to Theorem~\ref{T.oneproduct}\;
  Offer sequence $\bold{S}^t$, observe the user's feedback upon receiving the $k_t$ messages\;
 {
   \For{$i=1:k_t$}{
   update $u_{I(i),t}^{UCB}$ according to Equation~\eqref{E.uvalue}\;
   }
   update $n_e(t),n_a(t)$\;
   update $q_t^{UCB}$ according to Equation~\eqref{E.qvalue}; $t = t+1$\;
    }
 }
 \caption{An exploration-exploitation algorithm for SC-Bandit under marketing fatigue}\label{A.promotionfatigue}
\end{algorithm}

\subsection{Regret bound}
The regret for a policy  $\pi$  is defined as follows, 
$$Reg_\pi(T;\bold{u},q)=E_\pi\left[\sum_{t=1}^T U(\bold{S}^*,\bold{u},q)-U(\bold{S}^t,\bold{u},q)\right],$$
where $\bold{S}^*$ is the optimal sequence when $\bold{u}$ and $q$ are known to the platform, while $\bold{S}^t$ is the sequence offered to {the user} arriving at time $t$. $E_\pi$ denotes the expectation under the policy $\pi$. 

To analyze the regret, we first establish the following results. In Lemma~\ref{L.largedeviation}, we provide the concentration analysis of $u_{i,t}^{UCB}$ and $q_t^{UCB}$ using Hoeffding's inequality. 
 \begin{lemma}[Concentration bound]\label{L.largedeviation}
 For any $T_i(t)$ and $N_q(t)$, we have
 \begin{align*}
 &P\left(u_{i,t}^{UCB}-\sqrt{8\frac{\log t}{T_i(t)}}<u_i<u_{i,t}^{UCB}\right)\geq 1- \frac{2}{t^4}
 \end{align*}
 and
 \begin{align*}
 &P\left(q^{UCB}_t-\sqrt{8\frac{\log t}{N_q(t)}}<q<q_t^{UCB}\right)\geq 1-\frac{2}{t^4}.
 \end{align*}
 \end{lemma}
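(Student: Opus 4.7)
The plan is a direct application of Hoeffding's inequality to the empirical means $\hat u_i(t)$ and $\hat q(t)$, which by Lemma~\ref{L.unbiased} are unbiased averages of i.i.d.\ Bernoulli outcomes.

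First I would rewrite the target events in a symmetric form. Since $\sqrt{8\log t/T_i(t)} = 2\sqrt{2\log t/T_i(t)}$ and $u_{i,t}^{UCB} = \hat u_i(t)+\sqrt{2\log t/T_i(t)}$, simple algebra shows
\[u_{i,t}^{UCB}-\sqrt{8\log t/T_i(t)} = \hat u_i(t) - \sqrt{2\log t/T_i(t)},\]
so the event $\{u_{i,t}^{UCB}-\sqrt{8\log t/T_i(t)}<u_i<u_{i,t}^{UCB}\}$ is exactly $\{|\hat u_i(t)-u_i|<\sqrt{2\log t/T_i(t)}\}$, and analogously for the statement involving $\hat q(t)$. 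This reduction turns the two-sided UCB-style statement into a clean two-sided deviation event on the estimator.

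Next, conditional on $T_i(t)=n$, the sequential choice model makes $\hat u_i(t)$ the mean of $n$ i.i.d.\ Bernoulli$(u_i)$ samples (one indicator per user who actually saw message $i$, with acceptance probability $u_i$). Hoeffding's inequality applied with radius $\varepsilon=\sqrt{2\log t/n}$ then gives
\[P\bigl(|\hat u_i(t)-u_i|\geq \sqrt{2\log t/n}\,\big|\,T_i(t)=n\bigr)\leq 2\exp\bigl(-2n\cdot 2\log t/n\bigr)=2/t^4,\]
uniformly in $n$, so taking expectation over $T_i(t)$ preserves the bound and yields the first inequality. The argument for $\hat q(t)$ is identical: each of the $N_q(t)$ unsatisfying-message trials produces an independent Bernoulli$(q)$ outcome (success being ``user did not abandon''), and the same Hoeffding calculation with $N_q(t)$ in place of $T_i(t)$ gives the second inequality.

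The only genuine subtlety is that $T_i(t)$ and $N_q(t)$ are random stopping times determined by the algorithm's adaptive decisions, so conditioning on the sample size needs justification. I would handle this by the standard coupling: for each message $i$ pre-draw an i.i.d.\ Bernoulli$(u_i)$ sequence $X_{i,1},X_{i,2},\ldots$ independent of the algorithm and declare the response of the $k$-th user who ever sees message $i$ to equal $X_{i,k}$; an analogous coupling handles the abandonment Bernoullis. Hoeffding then applies at every deterministic $n$, and uniformity of the $2/t^4$ bound in $n$ transfers it to the random sample sizes. Everything else is routine algebra, and no other tail inequalities or probabilistic machinery are needed.
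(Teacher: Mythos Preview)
Your proposal is correct and matches the paper's approach: both reduce the two-sided UCB event to $|\hat u_i(t)-u_i|<\sqrt{2\log t/T_i(t)}$ (and analogously for $q$) and then apply Hoeffding's inequality to obtain the $2/t^4$ tail bound. Your explicit treatment of the random-sample-size issue via the pre-drawn i.i.d.\ coupling is the standard device here and is exactly what is needed to make the ``for any $T_i(t)$'' clause rigorous.
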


Next, Lemma~\ref{L.compare} shows that with the optimal sequence $\bold{S}^*$ determined under  $\bold{u}$ and $q$, its expected payoff is smaller than or equal to the payoff under the same strategy $\bold{S}^*$ when valuation $\bold{u}$ and the abandonment distribution parameter $q$ are higher. Note that this result only holds for $\bold{S}^*$, and does not generally hold for other sequence $\bold{S}$.


\begin{lemma}\label{L.compare}
Assume $\bold{S}^*$ is the optimal sequence of messages.  On the condition that $0\leq \bold{u} \leq \bold{u}^{UCB}$ and $0\leq q\leq q^{UCB}$, we have
$$ E[U(\bold{S}^*,\bold{u}^{UCB},q^{UCB})]\geq E[U(\bold{S}^*,\bold{u},q)].$$
\end{lemma}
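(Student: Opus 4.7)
\emph{Proof plan.} I will prove the statement by a downward induction on the suffixes of $\bold{S}^*$. Writing $\bold{S}^*[k{:}m]$ for the suffix consisting of the $k$-th through $m$-th messages of $\bold{S}^*$, define
\[
\Delta_k := E[U(\bold{S}^*[k{:}m], \bold{u}^{UCB}, q^{UCB})] - E[U(\bold{S}^*[k{:}m], \bold{u}, q)],
\]
so that the lemma is the case $k = 1$. The base $k = m$ reduces to $a_{I(m)}(\bold{u}^{UCB}, q^{UCB}) \geq a_{I(m)}(\bold{u}, q)$, where $a_i := r_i u_i - c(1-q)(1-u_i)$, which is immediate from $\partial a_i/\partial u_i = r_i + c(1-q) \geq 0$ and $\partial a_i/\partial q = c(1-u_i) \geq 0$.

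For the inductive step I exploit the one-step recursion $E[U(\bold{S}^*[k{:}m])] = a_{I(k)} + b_{I(k)} \cdot E[U(\bold{S}^*[k+1{:}m])]$, with $b_i := q(1-u_i)$. Letting $\Delta u := u_{I(k)}^{UCB} - u_{I(k)} \geq 0$, $\Delta q := q^{UCB} - q \geq 0$, $p := 1-q$, and $V_{k+1} := E[U(\bold{S}^*[k+1{:}m], \bold{u}, q)]$, a direct expansion (adding and subtracting $b_{I(k)}^{UCB} V_{k+1}$, and expanding $\Delta a_{I(k)}, \Delta b_{I(k)}$ in terms of $\Delta u, \Delta q$) yields
\[
\Delta_k = \Delta u \,\bigl[(r_{I(k)}+cp) - q V_{k+1}\bigr] + \Delta q\,(1 - u_{I(k)}^{UCB})\bigl[c + V_{k+1}\bigr] + b_{I(k)}^{UCB}\, \Delta_{k+1}.
\]
The third term is non-negative by the inductive hypothesis, and the second is non-negative because $V_{k+1} \geq 0$ (each $a_{I(l)} > 0$ for $l \in \bold{S}^*$ by Theorem~\ref{T.oneproduct}, weighted by non-negative products of $b$'s).

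The crux is the first bracket: showing $(r_{I(k)}+cp) \geq q V_{k+1}$. I plan to combine two ingredients. First, a short auxiliary induction establishes that for any sequence $\bold{S}$ sorted by decreasing $\theta$-score with positive leading score, $E[U(\bold{S}, \bold{u}, q)] \leq \theta_{I(1)}(\bold{u}, q)$: using $a_i = \theta_i(1 - b_i)$ and the inductive bound on the tail, $E[U(\bold{S})] \leq \theta_{I(1)}(1 - b_{I(1)}) + b_{I(1)} \theta_{I(2)} \leq \theta_{I(1)}$. Since $\bold{S}^*[k+1{:}m]$ inherits decreasing $\theta$-order at $(\bold{u}, q)$ from $\bold{S}^*$ (Theorem~\ref{T.oneproduct}), this yields $V_{k+1} \leq \theta_{I(k+1)}(\bold{u},q) \leq \theta_{I(k)}(\bold{u},q)$. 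Second, the elementary identity
\[
(r_i + cp)\bigl(1 - q(1-u_i)\bigr) - q\bigl(r_i u_i - cp(1-u_i)\bigr) = p(r_i + c) \geq 0
\]
rearranges to $(r_i + cp) \geq q \theta_i$. Chaining these, $(r_{I(k)}+cp) \geq q \theta_{I(k)}(\bold{u},q) \geq q V_{k+1}$, which closes the induction and hence the lemma.
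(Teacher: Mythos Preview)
Your proof is correct and follows the same overall architecture as the paper's: a downward induction on the suffixes of $\bold{S}^*$ via the one-step recursion $E[U(\bold{S}^*_k)] = a_{I(k)} + b_{I(k)}\,E[U(\bold{S}^*_{k+1})]$. The expansion of $\Delta_k$ into the three terms is exactly what the paper's induction unpacks to.

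The only substantive difference is how you justify the crucial inequality $r_{I(k)}+cp \geq q\,V_{k+1}$. The paper obtains it in one line from optimality: since $V_k$ is a convex combination of $r_{I(k)}$ and $qV_{k+1}-pc$, if $r_{I(k)} < qV_{k+1}-pc$ then $V_k < V_{k+1}$, so deleting $I(k)$ would strictly improve the payoff, contradicting the optimality of $\bold{S}^*$. You instead route through the score structure of Theorem~\ref{T.oneproduct}: an auxiliary induction gives $V_{k+1}\le \theta_{I(k+1)}\le \theta_{I(k)}$, and the algebraic identity $p(r_i+c)\ge 0$ gives $r_i+cp\ge q\theta_i$. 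Both arguments are valid; the paper's is shorter and uses only the abstract optimality of $\bold{S}^*$, while yours is more explicit and makes the connection to the $\theta$-ranking concrete. One minor caveat that applies to both arguments: the nonnegativity of the $\Delta q$ term requires $u_{I(k)}^{UCB}\le 1$ (and similarly $q^{UCB}\le 1$), which is implicit in treating $\bold{u}^{UCB},q^{UCB}$ as valid probability parameters.
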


\emph{Proof sketch:}
Define $E[U(\bold{S}_j)]$ as the expected payoff conditioned on a user entering level $j$. We show that this term can be expressed as  $E[U(\bold{S}_j)]= r_{I(j)}u_{I(j)} + (1-u_{I(j)})\left(qE[U(\bold{S}_{j+1})]-pc\right)$, which is a sum of the expected payoff generated if message $I(j)$ is selected and the future payoff  if $I(j)$ is not selected. Note that the inequality  $r_{I(j)}\geq E[U(\bold{S}_j^*)] \geq q E[U(\bold{S}_{j+1}^*)]-pc$ must hold. Otherwise, removing message $I(j)$ will improve the expected payoff. Using this condition,  we prove by induction  that $ E[U(\bold{S}_j^*,\bold{u}^{UCB},q^{UCB})]\geq E[U(\bold{S}_j^*,\bold{u},q)]$ for all $j$. By definition,  $E[U(\bold{S}_1^*)]  =E[U(\bold{S}^*)]$, and this completes the proof.   $\blacksquare$


Putting everything together, we characterize a regret bound of our online learning algorithm in Theorem~\ref{T.RegretBound}.

\begin{theorem}[Performance bounds for Algorithm \ref{A.promotionfatigue}]\label{T.RegretBound}
Given valuation $u_i$ of message $i$, $i\in X$ and parameter $q$, the regret of the policy during time $T$ is bounded by 
$$Regret_\pi(T;\bold{u},q)= O\left(N\sqrt{T\log T}\right)$$
 where $N$ is the total number of messages.
\end{theorem}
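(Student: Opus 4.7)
\emph{Proof plan for Theorem~\ref{T.RegretBound}.} The plan is to carry out a standard optimism-in-the-face-of-uncertainty argument, but adapted to the sequential structure where per-round feedback is partial and the payoff depends jointly on $\bold{u}$ and $q$. First I would split the regret at round $t$ according to whether the confidence-interval event
\[
\mathcal{E}_t \;=\; \bigl\{\, u_i \in [u_{i,t-1}^{UCB}-\sqrt{8\log t/T_i(t-1)},\, u_{i,t-1}^{UCB}]\ \forall i,\ \text{and}\ q \in [q_{t-1}^{UCB}-\sqrt{8\log t/N_q(t-1)},\, q_{t-1}^{UCB}]\,\bigr\}
\]
holds. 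By Lemma~\ref{L.largedeviation} together with a union bound over the $N$ message estimators and the single $q$ estimator, $P(\mathcal{E}_t^c)=O((N+1)/t^4)$. Since the per-round regret is bounded by a constant (rewards and $c$ are fixed), the contribution of the failure event $\mathcal{E}_t^c$ to the cumulative regret is $O(N\sum_t t^{-4})=O(N)$, which is absorbed into the stated bound.

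On the event $\mathcal{E}_t$, I would chain together two inequalities. By Lemma~\ref{L.compare} applied to $\bold{S}^*$, and then by optimality of $\bold{S}^t$ under the UCB surrogates (Theorem~\ref{T.oneproduct} is invoked inside the algorithm to compute $\bold{S}^t$),
\[
E[U(\bold{S}^*,\bold{u},q)] \;\leq\; E[U(\bold{S}^*,\bold{u}^{UCB}_{t-1},q^{UCB}_{t-1})] \;\leq\; E[U(\bold{S}^t,\bold{u}^{UCB}_{t-1},q^{UCB}_{t-1})].
\]
Thus the instantaneous regret is bounded above by $E[U(\bold{S}^t,\bold{u}^{UCB}_{t-1},q^{UCB}_{t-1})]-E[U(\bold{S}^t,\bold{u},q)]$, a sensitivity gap of a single payoff functional with respect to perturbations in the model parameters.

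The key technical step is to show this gap is Lipschitz in $(\bold{u},q)$ with constants independent of the sequence length $m$. Writing out
\[
E[U(\bold{S}^t)] \;=\; \sum_{k=1}^{m_t} q^{k-1}\!\prod_{j<k}(1-u_{I(j)})\bigl(u_{I(k)}r_{I(k)} - c(1-q)(1-u_{I(k)})\bigr),
\]
I would bound the partial derivatives with respect to each $u_{I(k)}$ and with respect to $q$. Because every factor $q^{k-1}\prod_{j<k}(1-u_{I(j)})$ is at most $1$ and $|r_i|,c$ are constants, telescoping the perturbation along the sequence yields a bound of the form
\[
E[U(\bold{S}^t,\bold{u}^{UCB},q^{UCB})]-E[U(\bold{S}^t,\bold{u},q)] \;\leq\; C \sum_{i\in \bold{S}^t \text{ observed}} (u_{i,t-1}^{UCB}-u_i) + C(q^{UCB}_{t-1}-q),
\]
for some absolute constant $C$. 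Here I only pay for messages that are actually observed, because unobserved messages contribute a zero factor $\prod_{j<k}(1-u_{I(j)})$ of $0$ once $u_{I(j)}=1$ on past iterates---or, more carefully, I would argue that the sensitivity contribution of message $i$ at time $t$ is multiplied by the probability it is reached, which exactly matches the expected increment of $T_i(t)$.

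Finally, I would apply the concentration widths from Lemma~\ref{L.largedeviation} to get per-round regret $O\bigl(\sum_{i\in \bold{S}^t}\sqrt{\log t/T_i(t-1)} + \sqrt{\log t/N_q(t-1)}\bigr)$ on $\mathcal{E}_t$, and then sum over $t$ using the standard pigeonhole identity $\sum_{t:i\text{ observed}}1/\sqrt{T_i(t-1)} \leq 2\sqrt{T_i(T)}\leq 2\sqrt{T}$, together with $N_q(T)\leq T\cdot \max_t m_t$ which is still $O(T)$ after accounting for the expected number of observations per round. This yields $O(\sqrt{T\log T})$ per message plus $O(\sqrt{T\log T})$ for the $q$-estimator, for a total of $O(N\sqrt{T\log T})$.

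The main obstacle I anticipate is the Lipschitz step: the payoff is a degree-$m$ polynomial in the parameters, so a naive bound blows up with the sequence length. The right way to control it is to exploit the probabilistic interpretation---each term in the derivative corresponds to the probability of reaching a particular level---so that the cumulative sensitivity coincides with the expected number of observations and matches the $T_i(t)$ counters used by the UCB, giving a dimension-free (in $m$) per-round bound.
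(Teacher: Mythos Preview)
Your proposal is correct and follows essentially the same route as the paper: split on the concentration event of Lemma~\ref{L.largedeviation}, chain Lemma~\ref{L.compare} with the optimality of $\bold{S}^t$ under the UCB surrogates to reduce the instantaneous regret to a sensitivity gap $E[U(\bold{S}^t,\bold{u}^{UCB},q^{UCB})]-E[U(\bold{S}^t,\bold{u},q)]$, and then control that gap by matching the per-parameter sensitivity to the expected increment of the corresponding counter $T_i(t)$ or $N_q(t)$. The only place the paper is more specific than your sketch is the $q$-error term, which it handles via a coupling argument rather than the direct pigeonhole you suggest; note that your inequality $N_q(T)\le T\max_t m_t$ points the wrong way (you need $N_q$ large, not small), but the ``sensitivity equals probability of reaching the level'' observation in your final paragraph is exactly the fix and is what the paper uses as well.
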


We want to highlight the difficulty of this regret analysis due to the incomplete feedback we observe after a sequence is offered. We are unable to estimate the parameter $q$ if users keep on selecting messages. Meanwhile, we are also unable to estimate the valuation $\bold{u}$ for those messages which are offered but are not seen by a user. The complete proof can be found in the supplementary material.

\emph{Proof sketch:}
Define the ``large'' probability event as $D_t:=\bigcap_{i=1}^N \left(u_{i,t}^{UCB}-\sqrt{8\log t/T_i(t)}<u_i<u_{i,t}^{UCB}\right)
\cap \left(q^{UCB}_t-\sqrt{8\log t/N_q(t)}<q<q_t^{UCB}\right)$.
To bound the regret, we consider the quantity $E[U(\bold{S}^*,\bold{u},q)- U(\tilde{\bold{S}}^t,\bold{u},q)]$ under {on} $D_t$
 and {on }its complement, respectively.

Define $\tilde{\bold{S}}^t$ as the optimal sequence when  valuation is $\bold{u}_t^{UCB}$ and the geometric parameter for the abandonment distribution is $1-q_t^{UCB}$. By the definition of 
$\tilde{\bold{S}}^t$ and $\bold{S}^{*}$, and with Lemma~\ref{L.compare}, we have
$E_\pi[U(\tilde{\bold{S}}^t,\bold{u},q)]\leq E_\pi[U(\bold{S}^{*},\bold{u},q)]\leq E_\pi[U(\bold{S}^*,\bold{u}_t^{UCB},q_t^{UCB})]\leq E_\pi[U(\tilde{\bold{S}}^t,\bold{u}_t^{UCB},q_t^{UCB})]$ on $D_t$. 
Thus, the difference $E_\pi\left[\sum_{t=1}^{T} U(\bold{S}^*,\bold{u},q)-U(\tilde{\bold{S}}^t,\bold{u},q)\right]$
can be bounded above by the expected difference between $U(\tilde{\bold{S}}^t,\bold{u}_t^{UCB},q_t^{UCB})$ and $U(\tilde{\bold{S}}^t,\bold{u},q)$.
This quantity can be further expressed as a sum of two terms which can be analyzed separately, namely, one term is related to the estimated error $q_t^{UCB}-q$, while another is related to the error $(u_{i,t}^{UCB}-u_i)1(i\in \tilde{\bold{S}}^t)$. 

Next, using the coupling method, we bound the error term on $q$. To analyze the regret term of $\bold{u}$, we derive the relation between the probability of exploring message $i$ and the expected regret caused by the error of $u_{i,t}^{UCB}-u_i$. With Lemma~\ref{L.largedeviation}, the regret on $D_t^c$ can be bounded. Combining the regret on $D_t$ and on $D_t^c$, we show that the total regret can be bounded above by $O(N\sqrt{T\log T})$. $\blacksquare$

\section{Personalization with contextual SC-Bandit}
Thus far, we have considered a setting where the platform determines an optimal sequence $\bold{S}^*$ for all its users who share the identical  abandonment distribution and valuations. In this section, we consider a more realistic setting where the abandonment distribution and valuations could differ across users based on some users' context $\bold{x}$. In other words, instead of learning the homogeneous parameter $q$ and $u_i$, the platform needs to learn $q(\bold{x})$ and $u_i(\bold{x})$ which will be used to determine personalized messaging sequences. 

Contextual bandit is an active research area that has received lots of attention in recent years (e.g., \citealp{chu2011contextual,li2010contextual, li2012unbiased,cheung2017thompson}). A common assumption is a linear relationship between the reward and the context. In our setting, since both $q$ and $\bold{u}$ denote probabilities and the observed reward is either 0 or 1, we use the logit model to model $q(\bold{x})$ and $u_i(\bold{x})$ respectively. That is,
\begin{align*}
q(\bold{x})&=e^{\bold{\alpha}^T\bold{x}}/(1+e^{\bold{\alpha}^T\bold{x}}),
\end{align*}
and
\begin{align*}
u_i(\bold{x})&=e^{\bold{\beta}_i^T\bold{x}}/(1+e^{\bold{\beta}_i^T\bold{x}}),
\end{align*}
where $\alpha\in \tilde{\Theta}$ and $\beta_i\in \Theta_i$ are  the unkown parameters to be learnt. $q(\bold{x})$ and $u_i(\bold{x})$ are  generalized linear models (\citealp{mccullagh1989generalized}) as $q(\bold{x})=\mu(\alpha^T\bold{x})$ and $u_i(\bold{y})=\mu(\beta_i^T \bold{x})$, where  $\mu(x)=\exp(x)/(1+\exp(x))$.

Next, we will propose an exploration-exploitation algorithms for the contextual SC-Bandit problem.
We adapt the GLM-UCB algorithm proposed by \cite{filippi2010parametric} for our contextual SC-Bandit problem. The key difference from the non-contextual version is that, during each update, we calculate the maximum quasi-likelihood estimator of the parameter $\hat{\beta}_i$, and then update $u_i^{UCB}(\bold{x})$ with $\mu(\hat{\beta}_i^T\bold{x})$ plus an ``exploration bonus" term defined by  $\rho(t)\|\bold{x}\|_{M_{i,t}^{-1}}$, where $\rho(t)$ is a slowly varying function which can be set as $\rho(t)=\sqrt{2\log(t)}$,  $\|v\|_M=\sqrt{v'Mv}$ denotes the matrix norm induced by the positive semidefinite matrix $M$ with 
$M_{i,t}=\lambda \bold{I} + \sum_{k=1}^{t-1} \bold{x}_k\bold{x}'_k 1(\text{user $\bold{x}_k$ observed message $i$})$ where $1(\cdot)$ is the indicator function, and $\lambda$ is a constant.  The update is similar for $q_t^{UCB}(\bold{x})$, i.e., $q^{UCB}_t(\bold{x})=\mu(\hat{\alpha}_i^T\bold{x})+\rho(t)\|\bold{x}\|_{\tilde{M}_{t}^{-1}}$ where $\tilde{M}_{t}=\lambda'\bold{I} + \sum_{k=1}^{t-1} \bold{x}_k\bold{x}_k'n_k$, $n_k$ denotes the number of messages that user $k$ observes, and $\lambda'$ is a constant.

To initialize the algorithm, for the first $N$ users, we offer each of them a message $i$ {where $i$ takes from 1 to $N$}. {In each iteration}, we first update $\hat{\alpha}_{t-1}$ and $\hat{\beta}_{i,t-1}$ based on prior user feedback. Next, we update $q_{t-1}^{UCB}(\bold{x})$ and $u_{i,t-1}^{UCB}(\bold{x})$ for the user $t$ with feature $\bold{x}$. The optimal messaging sequence is obtained by solving the optimization problem $\max_{\bold{S}} E[U(\bold{S},\bold{u}_{t-1}^{UCB}(\bold{x}),q_{t-1}^{UCB}(\bold{x}))]$. For completeness, the GLM-UCB algorithm is given below.

\begin{algorithm}
 \textbf{Initialization:} Available messages $X$ with known revenues $\bold{r}$. Offer each message $1,2,\cdots,N$ to user $1,2, \cdots, N$, observe decision\; 
 Update $M_{i,t},\tilde{M}_{t}$; $t=N$\;
 \While{$t<T$}{
 Update $\hat{\alpha}_t$ and $\hat{\beta}_{i,t}$ by quasi-MLE; $t = t+1$\; 
 Observe customer's contextual information $\bold{x}_t$
  Compute 
$\bold{S}^t=\argmax_{\bold{S}} \quad E[U(\bold{S},\bold{u}^{UCB}_{t-1}(\bold{x}_t),q_{t-1}^{UCB}(\bold{x}_t))]$ according to Theorem~\ref{T.oneproduct} where $q_{t-1}^{UCB}(\bold{x}_t)$ and $\bold{u}_{t-1}^{UCB}(\bold{x}_t)$ are computed by 
$$u_{i,t-1}^{UCB}(\bold{x}_t)=\mu(\hat{\beta}_{i,t-1}^T\bold{x}_t)+\rho(t)\|\bold{x}_t\|_{M_{i,t-1}^{-1}},\forall i$$
and
$$q_{t-1}^{UCB}(\bold{x}_t)=\mu(\hat{\alpha}_{t-1}^T\bold{x}_t)+\rho(t)\|\bold{x}_t\|_{\tilde{M}_{t-1}^{-1}}.$$
  Offer personalized messaging sequence $\bold{S}^t$, observe the customer's decision\; 
 {
  
   Update $M_{i,t},\tilde{M}_{t}$ \;
    }
 }
 \caption{GLM-UCB algorithm I for contextual SC-Bandit under marketing fatigue}\label{A.PersonalizedFatigue}
\end{algorithm}

\section{Numerical experiments}
In this section, we first investigate the robustness of Algorithm~\ref{A.promotionfatigue} which is our proposed UCB-algorithm for the SC-Bandit problem by comparing how the regret changes with respect to different values of $\bold{u}$. Next, we compare our Algorithm 1 and 2 with two benchmarks in the non-contextual and contextual settings respectively. 


\subsection{Robustness of SC-Bandit algorithm}

\paragraph{Experiment setup} We consider a setting with $N = 30$, revenue $r_i$ is uniformly distributed between [0,1], abandonment distribution probability $p = 0.1$ and the cost of abandonment $c = 0.5$. We present four scenarios, when the valuation $\bold{u}$ is  uniformly generated from [0,0.1], [0,0.2], [0.0.3], and [0.0.5], respectively. \\

\noindent{\bf{Result}}  ~~
Figure~\ref{fig:regret2} shows the results  based on 15 independent simulations for different scenarios of $\bold{u}$. The average regrets are 141.13, 121.91, 59.69, and 44.64, respectively. Figure~\ref{fig:regret2} suggests that when $u_i$s are more spread out, it is easier for the algorithm to learn them to a large degree. Meanwhile,  Figure~\ref{fig:regret2}  also reveals something more subtle. When $\bold{u}$ is generated uniformly from [0,0.1],  Algorithm~\ref{A.promotionfatigue} is able to find the optimal sequence before $T=25000$ for a large fraction of the simulations. On the other hand, when $\bold{u}$ is generated uniformly from [0,0.3] or [0,0.5],  the regret continues to increase after the initial 100,000 iterations, indicating that the algorithm has not found the optimal sequence yet. The intuition is that with higher valuations, the length of the optimal sequence could become longer. As the result, it is slower to learn the values of $\bold{u}$ precisely (especially for those messages which are placed later in the sequence), despite learning their approximate values quickly.


\begin{figure}
\centering
  \includegraphics[width=0.8\linewidth]{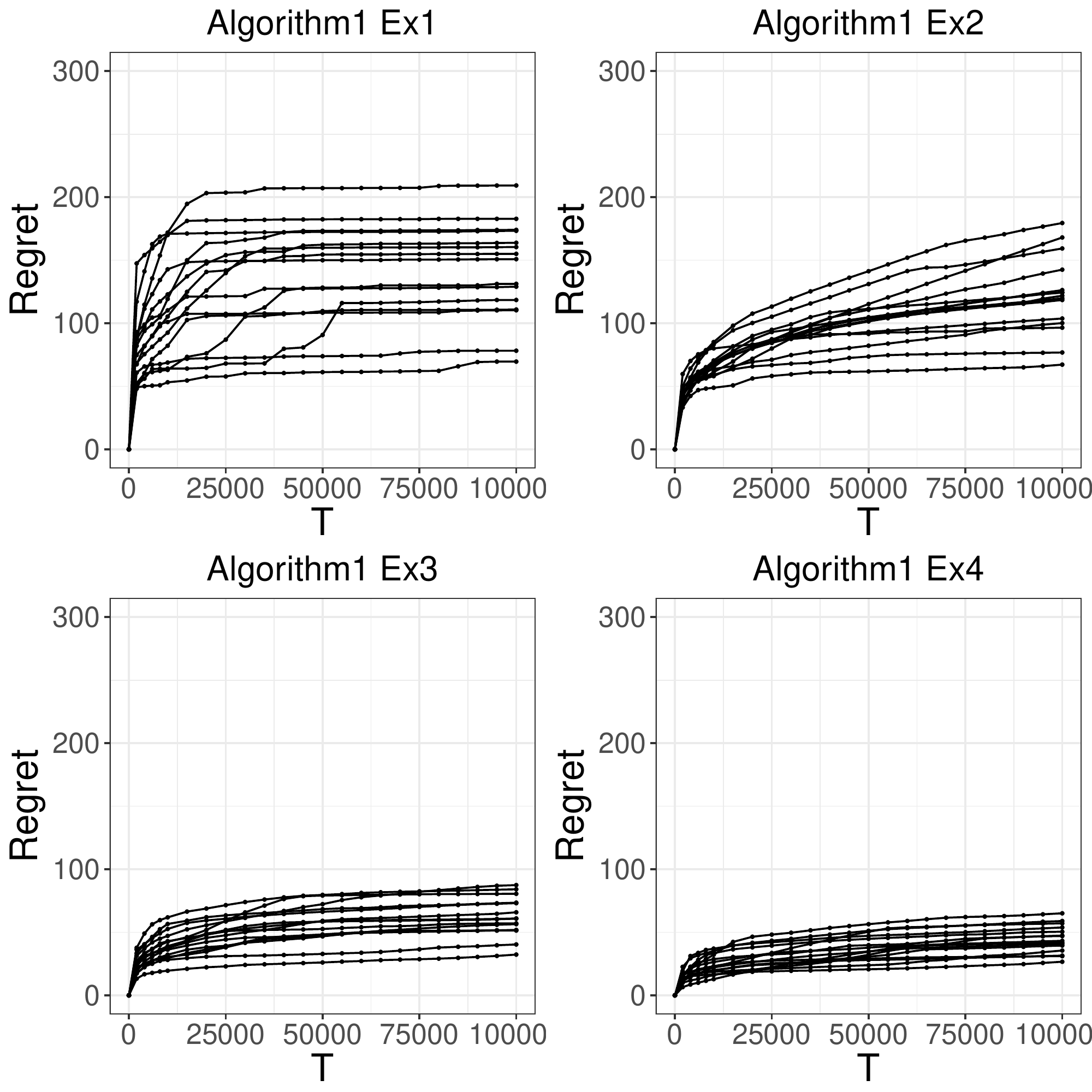}
  \caption{Comparison of Algorithm~\ref{A.promotionfatigue} when $\bold{u}$ is  uniformly generated from [0,0.1], [0,0.2], [0.0.3], and [0.0.5], respectively.}
  \label{fig:regret2}
\end{figure}

\subsection{Comparison with benchmark algorithms}
We analyze two benchmarks and compare their results with our algorithm. The first benchmark is an explore-then-exploit algorithm, while the second ``enhances'' the first benchmark by exploiting the knowledge it has already learned during its exploration phase. \\

\noindent{\bf{Benchmark-1}}  With the explore-then-exploit approach, there is an  exploration phase where every message is learnt for at least $\gamma\log(t)$ times during the time period $[0,t]$, where $\gamma$ is a tuning parameter. After this phase, the algorithm uses the estimated parameters to determine an optimal sequence which is offered to all subsequent users. 
We want to highlight that our setting differs from the traditional multi-armed bandit problem where an arm will be pulled if it is selected. In our setting, messages which are to appear later in the sequence may not be viewed by a user. Thus, in order to guarantee that message $i$ is explored, we only offer a single message in a sequence during the exploration phase, i.e., $\bold{S}^t=\{i\}$.\\

\noindent{\bf{Benchmark-2}}~~ This algorithm is a variant of Benchmark-{1}. 
During its exploration phase, suppose this benchmark aims to learn the valuation of message $i$. It first solves the optimal sequence problem based on the valuations of the messages which it has already learned, and then appends message $i$ to the beginning of the sequence. Thus, Benchmark-2 learns faster than Benchmark-1 as it offers more messages each time. In addition, it can optimize the sub-sequence to earn higher revenue than its counterpart, making it a competitive baseline.
The optimization problem one needs to solve here is nearly identical to (\ref{eq:optimization}) with an additional constraint that $S_1=\{i\}$. It can be proven that the optimal solution has $S_1=\{i\}$ as the first message and the messages of the remaining sequence are ordered according to $\theta_i$ as defined in Theorem~\ref{T.oneproduct}.\\
%
\paragraph{Experiment setup for SC-Bandit without contexts} We consider a setting that $N = 30$,  $r_i$ is uniformly distributed between [0,1], $p = 0.1$,  $c = 0.5$ and $\bold{u}$ is  uniformly generated from [0,0.1].
%
\paragraph{Experiment setup for SC-Bandit with contexts} We consider a setting with $N=30$, $r_i$ is uniformly distributed between $[0,1]$. The user feature $\bold{x}$ is uniformly generated from $[0,1]^3$. The coefficient related to the abandonment distribution is $\bold{\alpha}=(0.25,0.5,1,0.8)$ where $\alpha_1$ is the intercept. The coefficient related to the valuation of message $i$, $\bold{\beta}_i$, is uniformly generated from $[-2.5,0]^2\times [0,0.5]^2$ where $\beta_{i,1}$ is the intercept.\\


\begin{figure}
\centering
  \includegraphics[width=0.8\linewidth]{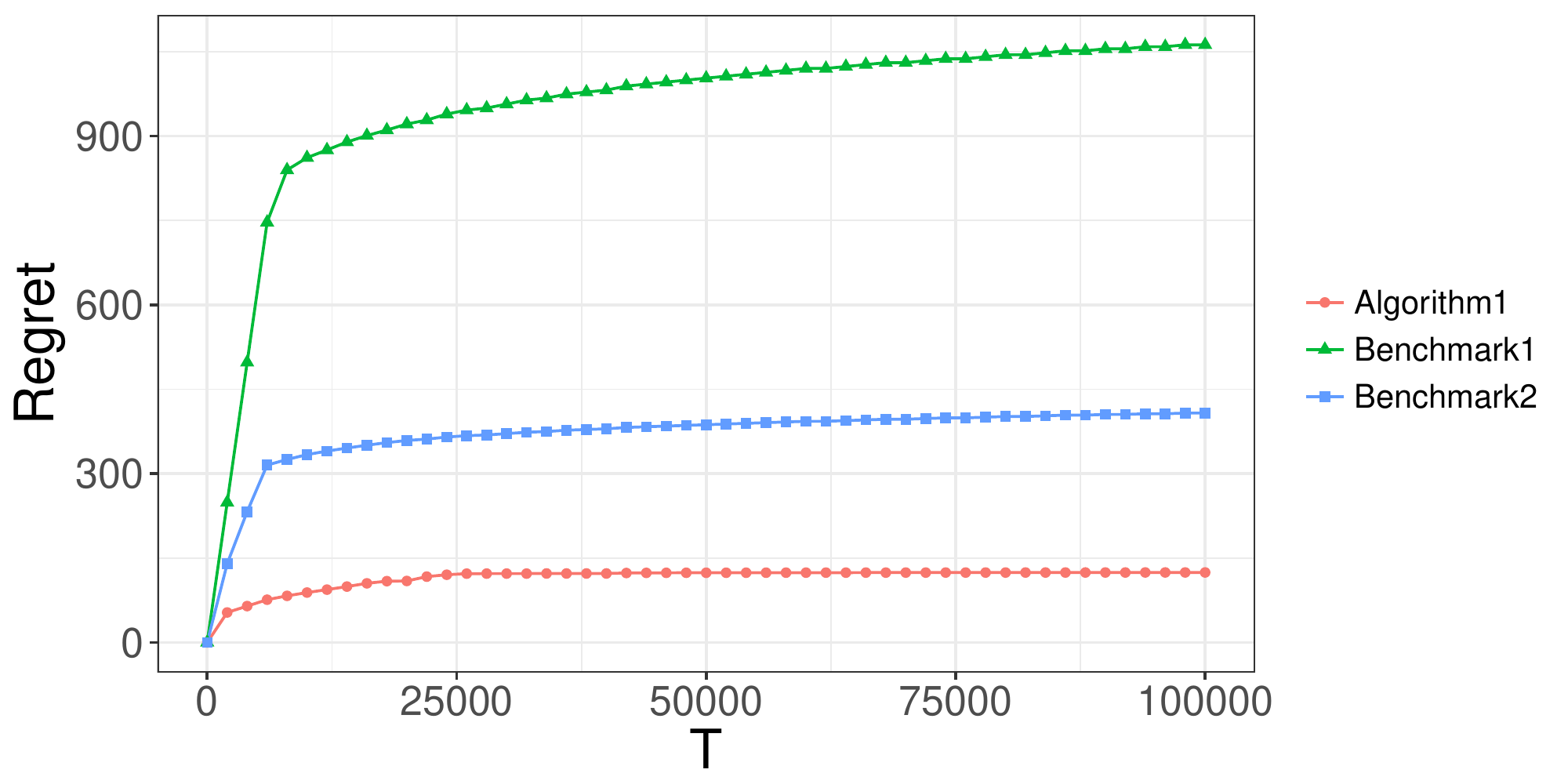}
  \caption{Comparison between Algorithm~\ref{A.promotionfatigue} and two benchmark algorithms in the non-contextual bandit setting.}
  \label{fig:regret}
\end{figure}

\begin{figure}
\centering
  \includegraphics[width=0.8\linewidth]{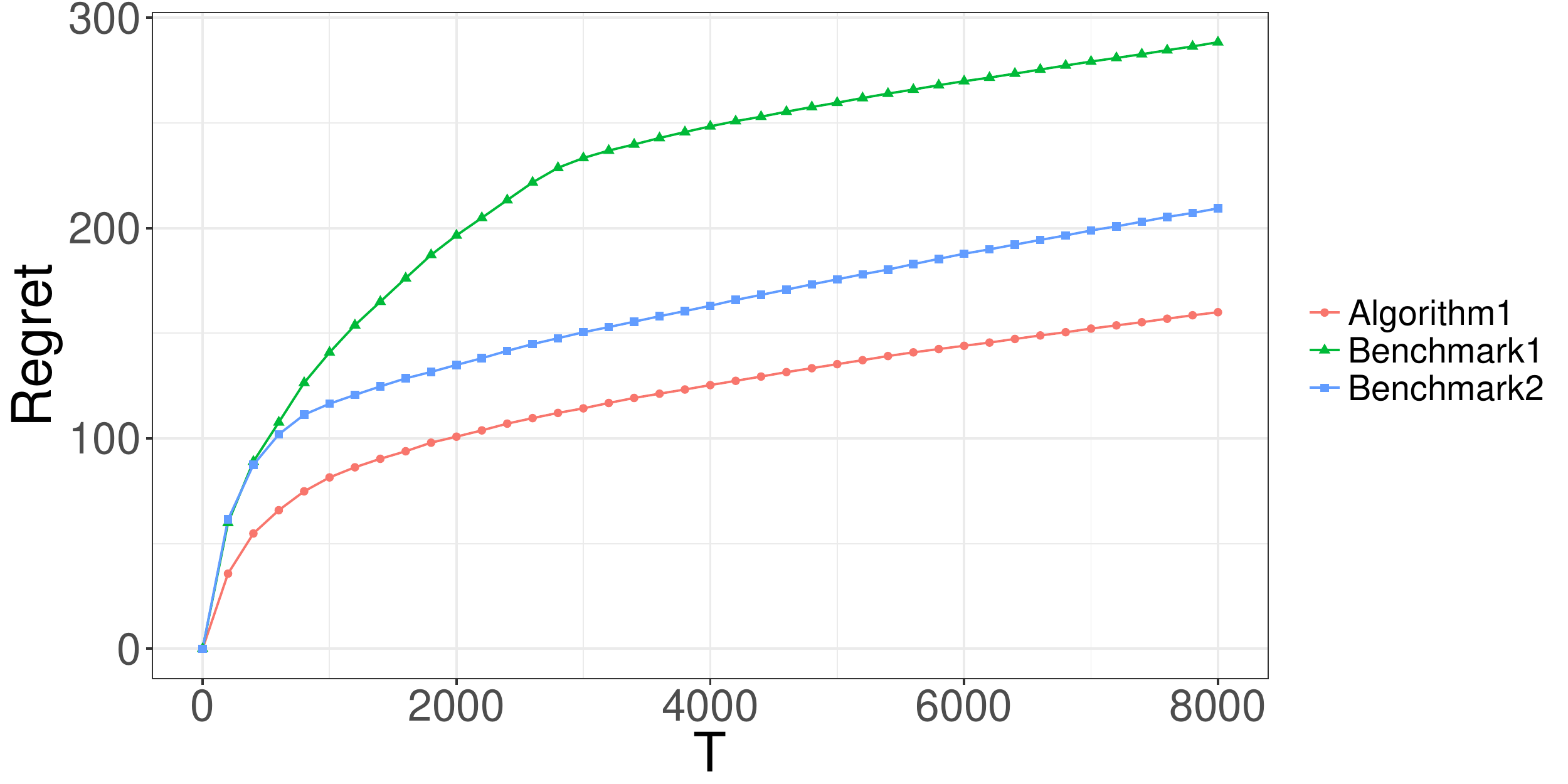}
  \caption{Comparison between Algorithm~\ref{A.PersonalizedFatigue} and two benchmark algorithms in the contextual bandit setting.}
  \label{fig:regret_contextual}
\end{figure}

\noindent{\bf{Result}} ~~
Figure~\ref{fig:regret} and \ref{fig:regret_contextual} shows the average regret of our algorithm and the two benchmarks under the non-contextual and contextual settings respectively. It is clear that our algorithm outperforms both benchmarks. In particular, Benchmark-{2} does better than Benchmark-{1} as it incorporates learning during its exploration phase.

\section{Conclusion}
In this work, we studied  dynamic learning of a sequential choice bandit problem when users could abandon the platform due to marketing fatigue. We showed that there exists an efficient algorithm to solve the offline optimization problem that determines an optimal sequence of messages to offer to users. For the online learning problem, we proposed an exploration-exploitation algorithm and showed that the resulting regret is bounded by  $O\left(N\sqrt{T\log T}\right)$. Lastly, we proposed a GLM-UCB algorithm to incorporate personalization with user contexts.

There are several future directions of this work. Firstly, as users' preferences may vary over time, it is interesting to incorporate the temporal dimension into the setting. Secondly, different user actions could reveal different levels of interest (e.g., the amount of time a user spent on a message, a user clicked on a message but did not complete a purchase etc.). One question is how to construct and analyze a more accurate user behavior model by utilizing such data. Thirdly, Thompson Sampling would be another natural algorithm to solve the problem we proposed, especially for the personalized version. However, analyzing this setting and providing theoretical results remain a challenging problem.


\bibliographystyle{apalike}  
\bibliography{fatigue}  


\end{document}